\newcommand{\OneOne}{(1+1)~GP}
\newtheorem{theorem}{Theorem}
\newtheorem{lemma}[theorem]{Lemma}
\newtheorem{algorithm}[theorem]{Algorithm}
\newcommand{\oneonegp}{(1+1)~GP\xspace}
\newcommand{\hvlMutateTwoPointOh}{HVL-Prime\xspace}
\newcommand{\oneonegpmulti}{(1+1)~GP-multi\xspace}
\newcommand{\oneonegpsingle}{(1+1)~GP-single\xspace}
\newcommand{\smogp}{SMO-GP\xspace}
\newcommand{\smogpsingle}{SMO-GP-single\xspace}
\newcommand{\smogpmulti}{SMO-GP-multi\xspace}
\newcommand{\MOORDER}{MO-ORDER\xspace}
\newcommand{\MOMAJ}{MO-MAJ\-ORITY\xspace}
\newcommand{\WMOORDER}{MO-WORDER\xspace}
\newcommand{\WMOMAJ}{MO-WMA\-JORITY\xspace}
\newcommand{\ORDER}{ORDER\xspace}
\newcommand{\MAJ}{MA\-JORITY\xspace}
\newcommand{\WORDER}{W\-ORDER\xspace}
\newcommand{\WMAJ}{WMA\-JORITY\xspace}
\newcommand{\Tinit}{T_{init}}
\newcommand{\ignore}[1]{}
\begin{document}


\title{Computational Complexity Analysis of  Multi-Objective Genetic Programming}

\author{
Frank Neumann\\
       School of Computer Science\\
       University of Adelaide\\
       Adelaide, SA 5005, Australia
}



\maketitle

\begin{abstract}
The computational complexity analysis of genetic programming (GP) has been started recently in \cite{GPFOGA11} by analyzing simple \oneonegp algorithms for the problems \ORDER and \MAJ.
In this paper, we study how taking the complexity as an additional criteria influences the runtime behavior.
We consider generalizations of \ORDER and \MAJ and present a computational complexity analysis of  \oneonegp using multi-criteria fitness functions that take into account the original objective and the complexity of a syntax tree as a secondary measure. Furthermore, we study the expected time until population-based multi-objective genetic programming algorithms
have computed the Pareto front when taking the complexity of a syntax tree as an equally important objective.
\end{abstract}


	\section{Introduction}
\label{sec:intro}

Genetic programming (GP)~\cite{koza:1992:book} is an evolutionary computation approach that evolves computer programs for a given task. This type of algorithm has been shown to be very successful in various fields such as symbolic regression, financial trading, and bioinformatics. We refer the interested reader to Poli et al.~\cite{poli08:fieldguide} for a detailed presentation of GP.
Various approaches such as schema theory, markov chain analysis, and approaches to measure problem difficulty have been used to tackle GP from a theoretical point of view~\cite{PoliVLM10}. Poli et al. \cite{PoliVLM10} state explicitly that they expect to see computational complexity results of genetic programming in the near future.

With this paper we start the computational complexity analysis of multi-objective genetic programming. 
This type of analysis has significantly increased the theoretical understanding of other types of evolutionary algorithms (see the books \cite{BookAugDoe,BookNeuWit} for a comprehensive presentation). For various combinatorial optimization problems such as minimum spanning trees~\cite{NeumannWegenerGECCO2005}, minimum multi-cuts~\cite{DBLP:conf/gecco/NeumannRS08,NeumannRPPSN08}, and covering problems~\cite{DBLP:conf/gecco/FriedrichHNHW07}, it has been shown that multi-objective models provably lead to more efficient evolutionary algorithms. 

Initial steps in the computational complexity  analysis of genetic programming have been made by Durrett et al.~\cite{GPFOGA11}. They have studied simple mutation-based genetic programming algorithms on the problems ORDER and MAJORITY introduced in \cite{GoldbergO98}.
Furthermore, the computational complexity of GP has been studied in the  PAC learning framework~\cite{DBLP:conf/gecco/KotzingNS11} defined by Vailiant~\cite{Valiant84} and for the Max Problem~\cite{MaxGECCO12} introduced by Gathercole and Ross~\cite{Gat-Ros:c:96}.

Classical genetic programming often suffers from the occurrence of bloat~\cite{DBLP:conf/eurogp/LangdonP98,DBLP:conf/eurogp/DignumP08}, i.e.\ the growth of parts in the syntax tree that does not have any contribution to the functionality of the program. Due to this, different mechanisms for handling bloat are often incorporated in GP algorithms (see e.g.~\cite{DBLP:journals/ec/LukeP06,DBLP:conf/gecco/PoliM08,DBLP:journals/ec/Alfaro-CidMVES10}).
A simple approach to deal with bloat in GP is to favor solutions of lower complexity if two solutions have the same function value with respect to the given objective function. This leads to a multi-criteria fitness function which is composed of the original function to be optimized and a function assigning a complexity value to a given solution. Still there is a total ordering on the set of possible solutions and a solution would be considered as optimal if it has the smallest complexity among all solutions that achieve the highest function value with respect to the original goal function.

 Another way of dealing with the bloat problem is to use a multi-objective approach where the original function and the complexity are equally important. This induces a partial order on the set of possible solutions as usually the original function and the complexity trade-off against each other.
An advantage of this approach is that solutions of different complexity are generated which gives practitioners insights on how quality trades off against complexity. 
Such an approach is taken in one of the most popular genetic programming tools called DataModeler~\cite{dataModeler:2010:manual} which allows the user to compute the trade-offs with respect to the quality of the model and its complexity.
In this case, often not the best solution with respect to the original function is used but a solution that is still of good quality while having a lower complexity. 

We introduce a population-based genetic programming algorithm for multi-objective optimization called \smogp that is motivated by the computational complexity analysis of an evolutionary multi-objective algorithm called SEMO. This algorithm has been considered in several computational complexity studies for binary search spaces~\cite{DBLP:journals/tec/LaumannsTZ04,Giel2003,NeumannWegenerGECCO2005,
DBLP:conf/gecco/NeumannRS08,NeumannRPPSN08,DBLP:conf/gecco/FriedrichHNHW07,GieLehECJ}. \smogp starts with a single solution, produces in each iteration one offspring, and stores the set of different trade-offs with respect to the given objective functions in the population. 
We study the effect of using the mentioned multi-objective approach in genetic programming in a rigorous way. To do this, we study the computational complexity of \smogp with respect to the runtime that it requires to achieve the so-called Pareto front which is the set of all possible trade-offs of the original given function and the complexity measure.

Throughout this paper, we consider the problems Weighted \ORDER (\WORDER) and Weighted \MAJ (\WMAJ). 
These are generalizations of \ORDER and \MAJ  which have been analyzed in \cite{GPFOGA11}.
This generalization is similar as the generalization of OneMax to the class of linear pseudo-Boolean functions in the investigations of evolutionary algorithms working on binary strings~\cite{DJWoneone}. The analysis of linear pseudo-Boolean has played a key role in the analysis of evolutionary algorithms working on binary string~\cite{WittSTACS12,DBLP:conf/gecco/DoerrJW10,DBLP:journals/algorithmica/Jagerskupper11}. This class of functions has also been examined in the context of ant colony optimization, but determining the exact optimization time of simple ACO algorithms for this class of functions is still a challenging open problem~\cite{DBLP:conf/foga/KotzingNSW11}.

We think that understanding the behavior of simple GP algorithms on \WORDER and \WMAJ will play a similar role in the computational complexity analysis of GP. In this paper, we present first steps in understanding the behavior of simple GP algorithms for these problems. In many cases, we consider GP algorithms carrying out one single mutation operation in each mutation step. This is comparable to randomized local search for binary strings. Our analyses provide important insights for the combination of the original function value and the complexity of the tree. We explicitly state that it is very interesting and challenging to analyze GP algorithms where a larger number of operations is possible in the mutation steps and list such topics for future work in the conclusions.

The outline of the paper is as follows. In Section~\ref{sec2}, we introduce the problems that we consider in this paper. Section~\ref{sec3}, examines the impact of the complexity as a secondary measure and presents runtime analyses for \oneonegp on \WORDER and \WMAJ. In Section~\ref{sec4}, we turn to multi-objective optimization and analyze the time until \smogp has computed the whole Pareto front. We finish with some conclusions and topics for future work.

\section{Preliminaries}
\label{sec2}
We consider tree-based genetic programming, where a possible solution to a given problem is given by a syntax tree. The inner nodes of such a tree are labelled by function symbols from a set $F$ and the leaves of the tree are labelled by terminals from a set $T$.

\begin{figure}[t]
\begin{center}
\includegraphics[scale=.3]{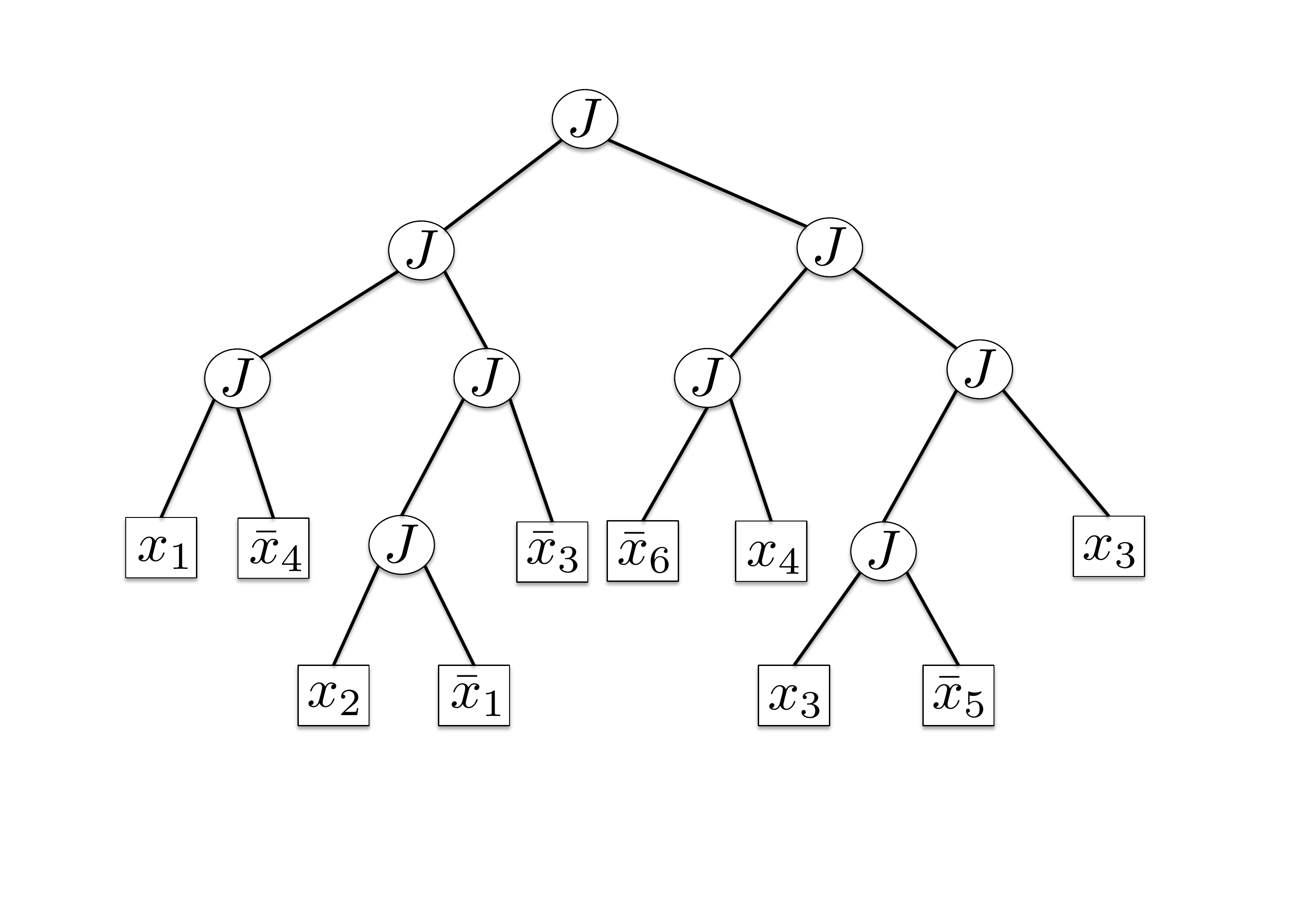}
\end{center}
\vspace{-1.3cm}
\caption{Example tree $X$ with $C(X)=19$}
\label{fig:tree}
\end{figure}

We examine the problems Weighted \ORDER (\WORDER) and Weighted \MAJ (\WMAJ)
which are generalizations of \ORDER and \MAJ analyzed in \cite{GPFOGA11}. 
For both, the only function is the join operation (denoted by $J$).  The terminal set $T$ is a set of $2n$  variables, where $\bar{x}_i$ is the complement of $x_i$:

\begin{itemize}
\item $F:=\{J\}$,  $J$ has arity 2.
\item $T:= \{x_1, \bar{x}_1, \ldots, x_n, \bar{x}_n\}$
\end{itemize}

A valid tree for $n=6$ is shown in Figure~\ref{fig:tree}.
We attach to each variable $x_i$ a weight $w_i \in \mathds{R}$, $1 \leq i \leq n$, such that the variables can differ in their contribution to the overall fitness of a tree. Without loss of generality, we assume that $w_1 \geq w_2 \geq \ldots \geq w_n >0$ holds throughout this paper. This assumption allows for an easier presentation, but is no restriction to the general case as our algorithms treat positive and negative variables in the same way, and do not give preference to any specific variable.

\begin{figure}[t]
 \fbox{
   \begin{minipage}{\columnwidth}
     \vspace{5pt}
Init: $l$ an empty leaf list, $S$ is an empty statement list.
\begin{enumerate}
\item Parse the tree X inorder and insert each leaf at the rear of $l$ as it is visited.
\item Generate $S$ by parsing $l$ front to rear and adding (``expressing'') a leaf to $S$ only if it or its complement are not yet in $S$ (i.e. have not yet been expressed).  
\item \WORDER(X)= $\sum_{x_i \in S} w_i$.
\end{enumerate}
 \vspace{5pt}
   \end{minipage}
 }

\caption{Computation of \WORDER(X)}
\label{fig:ord}

\end{figure}

\begin{figure}[t]
\fbox{
   \begin{minipage}{\columnwidth}
     \vspace{5pt}
Init: $l$ an empty leaf list, $S$ is an empty statement list.
\begin{enumerate}
\item Parse the tree X inorder and insert each leaf at the rear of $l$ as it is visited.
\item For $i \leq n$:  if  $\text{count}(x_{i} \in l) \geq \text{count}(\bar{x}_i \in l)$ and $\text{count}(x_{i} \in l) \geq 1$, add $x_{i}$ to $S$
\item Return \WMAJ(X)= $\sum_{x_i \in S} w_i$.
\end{enumerate}
 \vspace{5pt}
   \end{minipage}
 }
\caption{Computation of \WMAJ(X)}
\label{fig:maj}
\end{figure}

For a given syntax tree $X$, the value of the tree is computed by parsing the tree inorder. The weight $w_i$ of a variable $x_i$ contributes to the fitness iff $x_i$ is positive and contained in the set $S$ of the evaluation function.
For \WORDER $x_i$ is contained in $S$ iff it is present in the tree and there is no $\bar{x}_i$ that is visited in the inorder parse before $x_i$.
For \WMAJ, $x_i$ is contained in $S$ iff $x_i$ is present in the tree and the number of $x_i$ variables in $X$ is at least as high as the number of $\bar{x}_i$ variables in $X$. 
For a given tree $X$ their evaluation is shown in Figures~\ref{fig:ord} and \ref{fig:maj}. 
\ORDER and \MAJ as special cases where $w_i=1$, $1\leq i \leq n$, holds.

We illustrate both problems by an example.
Let $n=6$ and $w_1=13$, $w_2 =11$, $w_3=8$, $w_4=7$, $w_5 = 5$, $w_6=3$.
 For the tree  X show in Figure~\ref{fig:tree}, we get (after the inorder parse)  

\[
l = (x_{1}, \bar{x}_4, x_{2}, \bar{x}_1, \bar{x}_{3}, \bar{x}_6, x_4, x_3, \bar{x}_{5}, x_3)
\]

For \WORDER, we get
  $S=(x_{1}, \bar{x}_4, x_{2}, \bar{x}_{3}, \bar{x}_6, \bar{x}_5)$ and 
\[
\text{\WORDER}(X)= w_1 + w_2 = 13+11=24.
\]

For \WMAJ, we get 
 $S=(x_{1}, x_{2}, x_{3}, x_4)$ and 
\[
\text{\WMAJ}(X)= w_1 + w_2 + w_3 +w_4 = 13+11+8+7=39.
\]
The complexity $C$ of a given tree $X$ is the number of nodes it contains. For the tree $X$ given in Figure~\ref{fig:tree}, $C(X) =19$ holds.

There are two problems we will consider. The first one is the single-objective problem of one computing a solution $X$ which maximizes $F$. During the optimization run, our algorithms are allowed to use the function $C$ as an additional criteria if two solutions have the same function value with respect to $F$. The second problem is the computation of the Pareto front for the multi-objective problem given by $F$ and $C$.

We study genetic programming algorithms which take into account the originally given problem as well as the complexity of a given solution. We can formulate this as a multi-objective problem which assigns different objective values to a given solution.
Throughout this paper, we assume that we have one objective function $F$ that should be maximized and have the complexity $C$ of a GP-syntax tree as the second objective which should be minimized. 
$F$ can be considered as the original problem at hand, and the minimization of $C$ allows to cope with the bloat problem.
Our algorithms work with the multi-criteria fitness function MO-F(X)= (F(X), C(X)).

Consequently, we obtain the following problems when ad\-ding the complexity of a solution $X$ as the second criteria.
\begin{itemize}
\item \WMOORDER(X) = (\WORDER(X), C(X))
\item \WMOMAJ(X) = (\WMAJ(X), C(X))
\end{itemize}

For the special case where $w_i=1$, $1 \leq i \leq n$, holds, we obtain the problems

\begin{itemize}
\item \MOORDER(X) = (\ORDER(X), C(X))
\item \MOMAJ(X) = (\MAJ(X), C(X))
\end{itemize}

which add the complexity $C$ as an additional objective to the problems \ORDER and \MAJ. We will pay special attention to these problems and examine how the use of the additional complexity objective influences the runtime behavior as it allows a direct comparison to the results obtained in \cite{GPFOGA11}. Note, that an alternative way of modelling our problems is to work directly with the weights $w_i$ and $\bar{w}_i$, $1 \leq i \leq n$, as variables in the tree. Such a presentation is equivalent to the one we have chosen and would lead to the same results as presented in this paper.

\begin{figure}[t]
\fbox{
   \begin{minipage}{\columnwidth}
     \vspace{5pt}
Mutate $Y$ by applying \hvlMutateTwoPointOh $k$ times. For each application, randomly choose to either substitute, insert, or delete.

\begin{itemize}
\item If substitute, replace a randomly chosen leaf of $Y$ with a new leaf $u \in T$ selected uniformly at random.
\item If insert, choose a node $v$ in $Y$ uniformly at random and select $u \in T$ uniformly at random. Replace $v$ with a join node whose children are $u$ and $v$, with the order of the children chosen randomly.
\item If delete, randomly choose a leaf node $v$ of $Y$, with parent $p$ and sibling $u$. Replace $p$ with $u$ and delete $p$ and $v$.
\end{itemize}
 \vspace{5pt}
   \end{minipage}
 }
\caption{Mutation operator}
\label{fig:mut}
\end{figure}

We consider simple mutation-based genetic programming algorithms. They use the operator \hvlMutateTwoPointOh which has been part of the (1+1)~GP variants analyzed in  \cite{GPFOGA11}. \hvlMutateTwoPointOh allows to produce trees of variable length and is based on three different operations, namely insert, substitute and delete. Each application of \hvlMutateTwoPointOh chooses one of these operations randomly. Throughout this paper, randomly chosen always means randomly chosen with respect to the uniform distribution. The complete description of the mutation operator is given in Figure~\ref{fig:mut}. For its application a parameter $k$ determining the number of \hvlMutateTwoPointOh operations has to be chosen. As in \cite{GPFOGA11}, we consider two possibilities. In the case of single-operations $k=1$ holds. For multi-operations $k$ is chosen according to $1+ Pois(1)$ where $Pois(1)$ denotes the Poisson distribution with parameter $\lambda=1$.

\section{(1+1)~GP}
\label{sec3}

In this section, we consider \oneonegp algorithms working with the multi-criteria fitness functions introduced in the previous section. 
The algorithms are simple hill-climbers that explore their neighbourhood in dependence of the mutation operator.
They differ from the ones analyzed in \cite{GPFOGA11} only in the selection step. 
 The outline of \oneonegp is shown in Algorithm~\ref{alg:rls}. It starts with an initial solution $X$ and produces in each iteration one single offspring $Y$ by mutation. $Y$ replaces $X$ if it is favored according the selection mechanism. 
\begin{algorithm}[\oneonegp]
\label{alg:rls}
\begin{enumerate}
\item[]
\item Choose an initial solution $X$.
\item Repeat
\begin{itemize}
\item Set $Y:= X$.
\item Apply mutation to $Y$.
\item If selection favors $Y$ over $X$
then $X := Y$.
\end{itemize}
\end{enumerate}
\end{algorithm}

We will consider the algorithm \oneonegpsingle which applies the mutation operator \hvlMutateTwoPointOh once in each mutation step, i.e. the mutation operator given in Figure~\ref{fig:mut} is used for $k=1$.
Analyzing the computational complexity of this algorithm, we are interested in the expected number of fitness evaluations until the algorithm has found an optimal solution for the given problem $F$ for the first time. This is called the \emph{expected optimization time} of the analyzed algorithm. 

The worst case results for \oneonegpsingle obtained in \cite{GPFOGA11} depend on the maximum size of the tree (denoted by $T_{\max}$) that is encountered during the optimization process. To be more precise, the upper bound for \oneonegpsingle is $O(nT_{\max})$ for \ORDER and $O(n^2 T_{\max} \log \log n)$ for \MAJ.
As $T_{\max}$ is not known in advance, it is more desirable to have runtime bounds that only depend on the input and the size of the initial tree. In such a case, the user has complete knowledge on how much worse such a bound can get. Especially, in the light of the bloat problem, $T_{\max}$ can be assumed to be quite large for various types of problems. We will analyze our algorithms in dependence of the tree size of the initial solution (denoted by $T_{init}$). 

The key point of our study is to examine how the complexity of a solution as the secondary measure influences the runtime. The selection mechanism for the \oneonegpsingle variant studied in \cite{GPFOGA11} (\oneonegpsingle on F) and the selection in our algorithm  (\oneonegpsingle on MO-F) are shown in Figure~\ref{fig:ea}. Note, that using \oneonegpsingle on MO-F presents a parsimony approach which is quite common in genetic programming to deal with the bloat problem.

\begin{figure}
 \fbox{
   \begin{minipage}{\columnwidth}
     \vspace{5pt}
\label{fig:sel}
\begin{itemize}
\item \oneonegpsingle on $F$: Favor $Y$ over $X$ iff $$F(Y) \geq F(X).$$
\item \oneonegpsingle on MO-F: Favor $Y$ over $X$ iff $$(F(Y) > F(X)) \vee ((F(Y)=F(X)) \wedge (C(Y) \leq C(X))).$$
\end{itemize}
\vspace{5pt}
   \end{minipage}
 }
\caption{Selection for \OneOne}
\label{fig:ea}
\end{figure}

\subsection{Analysis}

We start our analysis of \oneonegpsingle by presenting a general lower bound on the expected optimization time.  This bound holds independently of the chosen fitness function and is a direct consequence of the coupon collector's theorem~\cite{MotwaniRaghavan}. 
\begin{theorem}
\label{thm:lboneone}
Let $X$ be the empty tree, then the expected time until \oneonegpsingle  has produced an optimal solution for \MOORDER and \MOMAJ is $\Omega(n \log n)$. 
\end{theorem}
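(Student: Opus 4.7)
I would reduce the lower bound to a coupon collector argument. The first step is the structural observation that an optimal solution for either \MOORDER or \MOMAJ has $F$-value $n = \sum_{i=1}^{n} 1$, and both fitness definitions attain this value only if every positive variable $x_i$, $1 \leq i \leq n$, occurs as a leaf of the current tree: for \ORDER, $x_i \in S$ requires $x_i$ to be present in the tree, and for \MAJ, $x_i \in S$ requires $\mathrm{count}(x_i)\geq 1$. Since \oneonegpsingle starts from the empty tree, this means that by the first step $T$ at which an optimum is reached, every $x_i$ must have been introduced as a leaf at some earlier step of the run.

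Next I would bound, in a single mutation step, the probability of introducing a specific $x_i$ into the tree. A leaf labelled $x_i$ can only be created by an insert operation whose newly drawn terminal is $u=x_i$, or by a substitute operation whose replacement terminal is $u=x_i$; delete never creates a new variable. Since \hvlMutateTwoPointOh picks one of substitute, insert, delete uniformly at random and then, conditional on insert or substitute, draws $u \in T$ uniformly, we have $\Prob[u=x_i]=1/(2n)$ in either case. Hence the probability that a given step introduces the specific variable $x_i$ into the tree is at most $2\cdot (1/3)\cdot (1/(2n)) = 1/(3n)$.

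The argument then finishes by coupon collector. Let $S$ be the first step by which each of $x_1,\ldots,x_n$ has appeared as a leaf at least once; the previous paragraphs yield $T \geq S$ pointwise, so it suffices to prove $\mathbb{E}[S]=\Omega(n\log n)$. When $k$ of the variables have not yet appeared, a union bound over those $k$ indices shows that the probability that some new variable appears in one step is at most $k/(3n)$. Thus the waiting time for the next new variable stochastically dominates a geometric random variable with success probability $k/(3n)$ and has expectation at least $3n/k$; summing over $k=1,\ldots,n$ gives $\mathbb{E}[S]\geq 3n\sum_{k=1}^{n}1/k = 3n\, H_n = \Omega(n\log n)$.

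I do not expect a serious obstacle here: the selection rule is irrelevant to this lower bound because rejection can only delay the first appearance of a new variable, never accelerate it, and the full machinery of inorder traversal and of the MAJ counting rule is only used through the mild structural fact that an expressed $x_i$ must correspond to at least one $x_i$-leaf. The one minor care point is the very first mutation acting on the empty tree, where substitute and delete have no leaves to operate on; this wastes at most a constant-factor number of steps and does not affect the asymptotic bound.
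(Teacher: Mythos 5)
Your proof is correct and follows essentially the same route as the paper: observe that every positive variable $x_i$ must be introduced into the tree at least once, bound the per-step probability of introducing a specific variable by $\frac{1}{3}\cdot\frac{1}{n}$, and invoke the coupon collector's theorem. You simply spell out the details (the union bound, the stochastic domination, and the harmonic sum) that the paper leaves implicit.
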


\begin{proof}
In order to produce an optimal solution for the given problems, each positive variable has to be introduced at least once into the tree. The probability to introduce one specific variable $x_i$ in the next step is at most $\frac{1}{3} \cdot \frac{1}{n}$. Using the coupon collector's theorem, the result follows immediately.
\end{proof}

Theorem~\ref{thm:lboneone} shows that we can not expect a better upper bound then $O(n \log n)$. This is a typical bound for many simple evolutionary algorithms as they usually encounter the coupon collector effect. 
In the following, we present upper bounds on the runtime of \oneonegpsingle working with the multi-criteria fitness functions. Theorem~\ref{thm:lboneone} implies that the upper bounds presented in the following are tight.

A variable $x_i$ is called \emph{expressed} if it contributes to the overall fitness of our original problem $F$. This is the case if a variable is positive and contained in the statement list $S$ of our evaluation function.
We call a solution $X$ \emph{non-redundant} if the number of expressed variables is $k$ and its complexity is $2k-1$. Furthermore, the empty tree is called \emph{non-redundant} as well.
 For the problems we consider, any tree that does not fall into the non-redundant category can be improved with respect to complexity without decreasing its fitness. Solutions where such improvements with respect to the complexity are possible are called \emph{redundant}. The key idea of our analysis is to show that the algorithm quickly eliminates redundant variables. After these redundant variables have been removed, the algorithm can introduce missing variables at any position of the tree.

We present upper bounds for  \oneonegpsingle on \WMOORDER and \WMOMAJ which are tight if $T_{init}=O(n \log n)$ holds.
\begin{theorem}
\label{thm:oneoneorder}
The expected optimization of \oneonegpsingle on \WMOORDER is $O(T_{init} + n  \log n)$.
\end{theorem}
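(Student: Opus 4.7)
The plan is to split the optimization into two phases using the monotone potential $\phi(X) = L(X) - k(X)$, where $L(X)$ is the number of leaves of $X$ and $k(X)$ is the number of currently expressed variables. By the definition of non-redundancy, $\phi(X) = 0$ iff $X$ is non-redundant. Phase~1 runs until the tree becomes non-redundant; Phase~2 then introduces any still-missing positive variables.

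I would first verify that $\phi$ is non-increasing along any accepted step, so that once the non-redundant state is reached it is never lost. An accepted deletion removes one leaf and, since fitness cannot decrease, either leaves $k$ unchanged (a redundant leaf was removed) or unblocks a previously suppressed variable (raising $k$), so $\phi$ drops by at least $1$. An accepted insertion must strictly improve $F$, because same-fitness inserts grow $C$ and are rejected by the MO-F selection rule; hence it expresses a new variable and adjusts both $L$ and $k$ by $+1$, leaving $\phi$ fixed. An accepted substitution keeps $L$ constant and cannot strictly decrease $k$ without also decreasing $F$, so again $\phi$ does not grow. To lower-bound the probability of a useful delete in a redundant tree I would use that at least $L - k = \phi$ of the $L$ leaves are non-essential (removing them preserves $F$), yielding probability at least $\phi/(3L)$ per step.

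Phase~1 I would then split at the threshold $\phi = L/2$. While $\phi \geq L/2$ we have $E[-\Delta\phi \mid X] \geq 1/6$, so additive drift bounds the time in this regime by $O(\phi_0) = O(T_{init})$; this regime is never re-entered, since a useful delete shrinks $\phi$ and $L$ by the same amount while useful inserts only grow $L$. Once $\phi < L/2$ we have $L < 2k \leq 2n$, the success probability is at least $\phi/(6n)$, and a coupon-collector-style sum $\sum_{\phi=1}^{2n} 6n/\phi$ gives $O(n \log n)$. For Phase~2 the tree is non-redundant with every leaf a distinct expressed positive literal and no $\bar{x}_i$ present, so an insert of $u = x_i$ at any node expresses variable $i$ whenever $i$ is currently missing; the probability per step of expressing some new variable is $(n-k)/(6n)$, and summing from $k$ up to $n$ yields $O(n \log n)$. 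Combining the two phases gives the claimed $O(T_{init} + n \log n)$.

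The main obstacle is the Phase~1 analysis. Proving that $\phi = L - k$ really behaves as a monotone potential under all three mutation operations together with the multi-criteria selection requires a careful case check, especially for substitutions and for deletions that simultaneously unblock a variable. The two-regime split is essential: a single drift bound is too weak for small $\phi$, while the harmonic sum is too weak for large $\phi$, and only by combining them does one obtain the decomposition $O(T_{init}) + O(n \log n)$.
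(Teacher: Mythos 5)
Your proposal is correct and follows essentially the same route as the paper: the same potential $s-k$ (your $\phi = L-k$), the same monotonicity argument under the MO-F selection, the same deletion-probability bound $\frac{s-k}{3s}$, and the same two-regime split of Phase~1 (the paper splits the fitness-level sum at $j=n$ where you split at $\phi = L/2$ and invoke additive drift, a cosmetic difference), followed by an identical coupon-collector Phase~2.
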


\begin{proof}
For our analysis we consider two phases. First we analyze the time until the tree has become \emph{non-redundant}. Afterwards, we bound the time to obtain an optimal solution.

We claim that after an expected number $O(T_{init} + n \log n)$ steps the tree is \emph{non-redundant}.
Let $k$ be the number of expressed variables
and $s$ be the number of leaves in the tree. Then there are $s-k$ variables that can be deleted without changing the \WORDER-value. Such a step reduces the complexity of the tree and is therefore accepted. The probability for such a deletion is at least 
$$
\frac{s-k}{3 \cdot s}.$$

 We show that the value of $s-k$ can not increase during the run of the algorithm. Obviously $k$ can not decrease as selection is primarily based on \WORDER. The number of leaves $s$ can only increase by $1$ if a step is an improvement according to \WORDER. In this case, $s-k$ does not change which shows that $s-k$ does not increase during the run of the algorithm.
Using the method of fitness-based partitions~(see e.g. Chapter $4$ in~\cite{BookNeuWit}) the expected time until  $s=k$ holds is upper bounded by

\begin{eqnarray*}
& & \sum_{j=1}^{T_{init}} \left(\frac{j}{3  \cdot(j+k)}\right)^{-1}\\
 &  = & \sum_{j=1}^{n} \left(\frac{j}{3 \cdot (j+k)} \right)^{-1}
 +  \sum_{j=n+1}^{T_{init}} \left(\frac{j}{3  \cdot (j+k)} \right)^{-1}\\
& \leq & 3 \cdot \sum_{j=1}^{n} \left(\frac{j}{j+n} \right)^{-1}
 +  3 \cdot \sum_{j=n+1}^{T_{init}} \left(\frac{j}{j+n} \right)^{-1}\\
& \leq & 3 \cdot \sum_{j=1}^{n} \left(\frac{j}{j+n} \right)^{-1}
 +3 \cdot \sum_{j=n+1}^{T_{init}} \left(\frac{1}{2} \right)^{-1}\\
& = & 3 \cdot\sum_{j=1}^{n} \frac{j+n}{j}
 + 3 \cdot \sum_{j=n+1}^{T_{init}} 2\\
  & = & O(n \log n) + O(T_{init})
\end{eqnarray*}

Now, we consider the time to reach an optimal solution and work under the assumption that $X$ is non-redundant. Note, that this invariant is maintained as we have shown that the difference $s-k$ can not increase during the run of the algorithm.
Let $n-k$ be the number of unexpressed variables after for the first time a non-redundant tree has been obtained. Any of these $n-k$ variables can be inserted at any position in the tree in order to improve the \WORDER-value. In total, there are $2n$ variables to choose from. Hence, the probability to achieve an improvement is at least 
$$
\frac{1}{3} \cdot \frac{n-k}{2n}.$$

Using again the method of fitness-based partitions, the expected time to achieve an optimal tree which consists of the variables $x_i$, $1 \leq i \leq n$, is upper bounded by
\[\sum_{k=0}^{n-1} \left(\frac{n-k}{6 \cdot n}\right)^{-1} = 6n \cdot\sum_{k=0}^{n-1} \frac{1}{n-k} = O(n \log n).
\]

Summing up the runtimes for the two phases, the expected optimization time of \oneonegpsingle on \WMOORDER  is $O(T_{init} + n  \log n)$.
\end{proof}

We now transfer the previous result to the problem \WMAJ. The analysis carried out in \cite{GPFOGA11} for \oneonegpsingle on \MAJ has to take into account random walk arguments for dealing with plateaus in the search space which leads to a runtime bound of $O(n^2 T_{\max} \log \log n)$ for \MAJ.

Using \WMOMAJ we do not face the difficulty of a plateau during the optimization as the (1+1)~GP variants considered in \cite{GPFOGA11}. The random walk is averted as solutions with the same \WMAJ-value, but a higher complexity are not accepted by the algorithm. In fact, the additional search direction given by the information on the size of the tree leads to a similar fitness landscape as for \WMOORDER. This leads to the following result.

\begin{theorem}
\label{thm:oneonemaj}
The expected optimization time of \oneonegpsingle on \WMOMAJ is $O( T_{init} + n \log n)$.
\end{theorem}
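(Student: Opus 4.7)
The plan is to mirror the two-phase analysis of Theorem~\ref{thm:oneoneorder}. In Phase 1 I argue that the current tree becomes non-redundant in expected $O(T_{init}+n\log n)$ steps, and in Phase 2 that the optimum is then reached in a further $O(n\log n)$ steps. Throughout I write $s$ for the number of leaves of the current tree, $e$ for the number of expressed variables under \WMAJ, and use the potential $m:=s-e$, whose value is $0$ precisely when the tree is non-redundant. The remark preceding the theorem foreshadows the point: solutions of equal \WMAJ-value but higher complexity are discarded, so the plateau faced in \cite{GPFOGA11} collapses.

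For Phase 1 I first verify that $m$ is non-increasing under every accepted move of \oneonegpsingle on \WMOMAJ. An insertion raises $C$ by two and is accepted only when \WMAJ strictly improves, which forces the inserted leaf to be some $x_j$ that newly enters $S$, giving $\Delta s=\Delta e=1$ and $\Delta m=0$. A substitution leaves $s$ fixed, and if $e$ decreased, an expressed variable's weight contribution would be lost with no compensating gain, contradicting the acceptance condition $\Delta F\geq 0$; hence $\Delta e\geq 0$ and $\Delta m\leq 0$. A deletion has $\Delta s=-1$, and the same weight-accounting gives $\Delta e\geq 0$, so $\Delta m\leq -1$.

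Next I need a lower bound on the number $D$ of deletable leaves (leaves whose removal does not decrease \WMAJ) as a function of $m$. A per-variable inspection by count pair $(p_a,q_a)$ of $x_a$- and $\bar{x}_a$-leaves shows that the ratio $D_a/m_a$ is minimized at the tie $p_a=q_a\geq 1$, where exactly the $q_a$ complement leaves are deletable while the contribution to $m$ is $2p_a-1$, giving $D_a\geq m_a/2$; in every other case with $m_a>0$ all leaves of variable $a$ are deletable. Summing yields $D\geq m/2$, so an accepted deletion reducing $m$ by at least one occurs with probability at least $(s-e)/(6s)$. Since $s=m+e\leq m+n$, the expected time to drop $m$ by one from level $m$ is at most $6(m+n)/m=6+6n/m$, and summing for $m=1,\dots,m_0\leq T_{init}$ and splitting the resulting sum at $n$ exactly as in the proof of Theorem~\ref{thm:oneoneorder} gives $O(T_{init}+n\log n)$.

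Phase 2 is then essentially identical to the corresponding phase of Theorem~\ref{thm:oneoneorder}: in a non-redundant tree each expressed variable occurs exactly once as $x_a$ and no $\bar{x}_a$ is present, so for any missing $x_j$ both $x_j$ and $\bar{x}_j$ have count $0$ in the tree; inserting $x_j$ at any node therefore brings it into $S$ and strictly improves \WMAJ, yielding an improvement probability of at least $(n-k)/(6n)$ when $k$ positive variables are already present. Fitness-based partitioning then gives $O(n\log n)$, and adding the two phases finishes the theorem. The main obstacle will be establishing both the monotonicity of $m$ under weighted substitutions and the bound $D\geq m/2$: both require the variable-by-variable case analysis above, since the majority rule breaks the clean ``every surplus leaf is individually deletable'' picture that makes the \WORDER argument immediate.
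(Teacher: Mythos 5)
Your proof is correct and takes essentially the same route as the paper, which proves this theorem by transferring the two-phase argument of Theorem~\ref{thm:oneoneorder} after noting that $s-k$ is non-increasing and that any missing variable can be inserted at any position. Your additional case analysis showing that under the majority rule only at least $m/2$ (rather than all $m$) redundant leaves are individually deletable in the tied case $p_a=q_a$ is a genuine detail the paper glosses over, but it affects only the constant, not the asymptotic bound.
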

\begin{proof}
The proof of Theorem~\ref{thm:oneoneorder} for \WMOORDER has only used the fact that the difference $s-k$ can not increase during the run of the algorithm and that later on (in the second phase) each non-expressed variable can be inserted at any position in the current tree. Both properties also hold for \WMOMAJ which implies that we get the same upper bound of $O( T_{init} + n \log n)$.
\end{proof}

\section{Multi-Objective Algorithms}
\label{sec4}
The previous section has shown that using the complexity of the syntax tree as a secondary measure can provably lead to better upper bounds on the runtime of simple genetic programming algorithms. 
Depending on the complexity that one allows for a given problem, the value of the best solution for the original problem $F$ may vary. In the case of multi-objective optimization, we are interested in the different trade-offs between the original problem $F$ and the complexity $C$.
In this section, we analyze simple multi-objective genetic programming algorithms until they have computed the whole Pareto front for a given problem MO-F(X) = (F(X), C(X)).

\subsection{Multi-Objective Genetic Programming}
The idea in multi-objective optimization is to treat the given criteria as equally important. 
We consider the following relations on search points which will later on be used in the selection step of our algorithms.
\begin{enumerate}
\item A solution $Y$ \emph{weakly dominates} a solution $X$ (denoted by $Y \succeq X$) iff $(F(Y) \geq F(X) \wedge C(Y) \leq C(X))$.
\item A solution $Y$ \emph{dominates} a solution $X$ (denoted by $Y \succ X$) iff   $(Y \succeq X)  \wedge (F(Y) > F(X) \vee C(Y) < C(X))$.
\item Two solutions $X$ and $Y$ are called \emph{incomparable} iff neither $X \succeq Y$ nor $Y \succeq X$ holds.
\end{enumerate}

A solution is called \emph{Pareto optimal} iff it is not dominated by any other solution in the search space $S$. The set of Pareto optimal solutions is called the \emph{Pareto optimal set} and the set of corresponding objective vectors is called the \emph{Pareto front}. The classical goal in multi-objective optimization is to compute for each objective vector of the Pareto front a corresponding Pareto optimal solution. We introduce and analyze an algorithm called Simple Multi-Objective Genetic Programming (\smogp) which is motivated by the Simple Multi-Objective Optimizer (SEMO) algorithm that has frequently been considered in the computational complexity analysis of evolutionary multi-objective optimization algorithms for binary search spaces ~\cite{DBLP:journals/tec/LaumannsTZ04,Giel2003,NeumannWegenerGECCO2005,
DBLP:conf/gecco/NeumannRS08,NeumannRPPSN08,DBLP:conf/gecco/FriedrichHNHW07,GieLehECJ}.
\smogp starts with a single solution and  produces in each iteration one single offspring $Y$ by mutating an individual of the current population $P$. The population consists in each iteration of a set of solutions that are non-dominated by any other solution seen so far during the run of the algorithm. In the selection step, the offspring $Y$ is added to the population $P$ iff it is not dominated by any other solution in $P$. If $Y$ is added to $P$ all solutions that are weakly dominated by $Y$ are removed from $P$.

\begin{algorithm}{\smogp}
\begin{enumerate}
  \setlength{\itemsep}{2pt}
  \setlength{\parskip}{0pt}
  \item Choose an initial solution $X$.
  \item Set $P := \{X\}$.
  \item Repeat
    \begin{itemize}
      \setlength{\itemsep}{2pt}
      \setlength{\parskip}{0pt}
      \item Choose $X \in P$ uniformly at random.
	\item Set $Y:= X$.
	\item Apply mutation to $Y$.
      \item If $\{Z \in P \mid Z \succ Y\} = \emptyset$,\\ set $P := (P \setminus \{Z \in P \mid Y \succeq Z\}) \cup \{Y\}$.
    \end{itemize}
\end{enumerate}
\end{algorithm}

We consider the algorithms \smogpsingle and \smogpmulti. Both use the mutation operator given in Figure~\ref{fig:mut}. For \smogpsingle k =1 holds, and for \smogpmulti  the parameter $k$ is chosen according to $1+ Pois(1)$.
Our goal is to investigate the expected number of iterations until our algorithms have computed a population which contains for each Pareto optimal objective vector a corresponding solution. We call this the \emph{expected optimization time} of the multi-objective genetic programming algorithms.

Our multi-objective model trades off the function value against the complexity value. A special Pareto optimal solution of the multi-objective model is the empty tree which has the lowest possible complexity value. The following lemma bounds the expected time until the empty tree has been included into the population $P$ when considering an arbitrary problem MO-F.
We denote by $\Tinit$ the size of the tree of the initial solution and analyze the time to include the empty tree in dependence of $\Tinit$ and the number of different fitness values of the problem $F$. 

\begin{lemma}
\label{lem:init}
Let $\Tinit$ be the size of the initial solution and $k$ be the number of different fitness values of a problem F. Then the expected time until the population of \smogpsingle and \smogpmulti applied to MO-F contains the empty tree is $O(k\Tinit)$.
\end{lemma}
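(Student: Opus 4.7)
The plan is to track the potential $c^*(t) := \min_{Z \in P_t} C(Z)$, the smallest complexity currently represented in the population, and show that it drifts down to $0$ within $O(k \Tinit)$ iterations by a fitness-level style argument.

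I would first record two structural facts. (i) At any time the population is a set of mutually non-dominated solutions; moreover, two trees sharing the same $F$-value are always weakly comparable (the one of larger complexity is weakly dominated), so the replacement rule retains at most one representative per distinct $F$-value. Hence $|P_t| \le k$ throughout the run. (ii) The potential $c^*(t)$ is monotonically non-increasing: a current minimiser $Z^*$ of complexity $c^*$ can only be evicted by an incoming $Y$ with $Y \succeq Z^*$, which forces $C(Y) \le c^*$, so the new minimum cannot exceed the old one.

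Next I would lower-bound the one-step probability that $c^*$ strictly decreases, provided $c^* > 0$. Let $Z^*$ be any current individual with $C(Z^*) = c^*$. The probability of selecting $Z^*$ as the parent is $1/|P_t| \ge 1/k$. Conditional on selection, I would restrict to the event that exactly one application of \hvlMutateTwoPointOh takes place and it is a delete, which has probability $1/3$ under \smogpsingle and probability $e^{-1}\cdot 1/3$ under \smogpmulti (the $e^{-1}$ is the probability that $1+\mathrm{Pois}(1)$ equals $1$). In either case the offspring $Y$ has complexity $c^* - 2$. Because every member of $P_t$ has complexity at least $c^*$, none of them satisfies $C(\cdot) \le C(Y)$, so no current member strictly dominates $Y$, and $Y$ is admitted. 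Therefore a strict drop of at least $2$ in $c^*$ happens in each iteration with probability $\Omega(1/k)$, contributing expected waiting time $O(k)$ between successive drops.

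Since $c^*(0) \le \Tinit$ and a drop removes at least $2$ from the potential, at most $\lceil \Tinit / 2\rceil$ successful drops suffice; summing the geometric waiting times by linearity of expectation yields a total of $O(k \Tinit)$. The main obstacle is the boundary case $c^* = 1$, in which the minimiser is a single leaf with no parent/sibling pair, so delete as written does not apply. I expect to dispatch this separately: the transition from the single-leaf tree to the empty tree either falls under a convention on delete consistent with the rest of the paper, or it can be treated by a dedicated $O(k)$ argument whose contribution is absorbed into the $O(k \Tinit)$ bound. Once that corner is handled, the drift argument above applies uniformly to both \smogpsingle and \smogpmulti and yields the claim.
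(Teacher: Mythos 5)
Your proposal is correct and follows essentially the same route as the paper: bound the population size by $k$, select the minimum-complexity individual with probability at least $1/k$, apply a single deletion with probability at least $1/(3e)$, and sum the at most $\Tinit$ resulting waiting times of $O(k)$ each. You merely make explicit a few points the paper leaves implicit (why the offspring is accepted, why the minimum complexity is non-increasing, and the single-leaf boundary case of delete), which is a useful tightening but not a different argument.
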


\begin{proof}
As the problem F has at most $k$ different fitness values, the population size of the algorithms is bounded by $k$. At each time step  we consider the solution with the lowest complexity in the population. This solution is selected for mutation with probability at least $1/k$. A single deletion operation applied to this individual leads to a new solution of lower complexity. The probability for such a mutation step is at least $1/(3ek)$. Summing up the different values for the minimal tree size in the population, we get
\[
\sum_{i=1}^{\Tinit} 3ek = 3ek\Tinit=O(k\Tinit)
\]
as an upper bound on the expected time until the empty tree is included in the population.
\end{proof}

\subsection{\ORDER and \MAJ}

We now examine how \smogpsingle and \smogpmulti can compute the Pareto front for the multi-objective problems given by \MOORDER and \MOMAJ.
In the following, we show that
both algorithms compute the whole Pareto front for both problems in expected time $O(n \Tinit  + n^2 \log n)$.

We remark that a lower bound of $\Omega(n^2 \log n)$ holds for both algorithms and both problems when starting with the empty tree. This bound can be obtained by using the coupon collector's theorem in a similar way as in Theorem~\ref{thm:lboneone} and taking into account the additional factor of $n$ for the population size.

\begin{theorem}
\label{thm:smoorder}
The expected optimization time of \smogpsingle and \smogpmulti on \MOORDER is $O(n \Tinit  + n^2 \log n)$.
\end{theorem}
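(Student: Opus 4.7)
My plan is to split the analysis into two phases: the time to bring the empty tree into the population, and the time to fill in the remaining Pareto front by successive single-leaf insertions. Before starting, I would pin down the Pareto front precisely: since an ORDER value of $j$ requires at least $j$ expressed positive variables, hence at least $j$ leaves in the tree, any tree achieving ORDER $j$ has complexity at least $2j-1$, and the only way to realise $(j, 2j-1)$ is with a tree whose $j$ leaves are $j$ distinct positive variables. The Pareto front is therefore $\{(0,0)\} \cup \{(j, 2j-1) : 1 \leq j \leq n\}$, of size $n+1$, and since non-dominated selection keeps at most one individual per distinct ORDER value, $|P| \leq n+1$ holds throughout the run. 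Any Pareto-optimal solution that enters $P$ stays in $P$ forever. Since ORDER admits $n+1$ distinct fitness values, Lemma~\ref{lem:init} already bounds the expected time for the empty tree to appear in $P$ by $O(n T_{init})$ for both \smogpsingle and \smogpmulti.

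For the second phase I would argue by induction on the largest index $j$ such that $P$ contains the Pareto-optimal solution at level $j$; this starts at $j=0$ as soon as the empty tree is present. Suppose that solution is currently in $P$; by the structural description above it is a tree of $j$ distinct positive-variable leaves. Selecting it for mutation has probability at least $1/(n+1)$. A single insert operation attaching one of the $n-j$ missing positive variables as a new leaf has probability at least $\tfrac{1}{3}\cdot \tfrac{n-j}{2n}$ per iteration of \smogpsingle; for \smogpmulti the extra factor $\Pr[k=1] = 1/e$ only costs a constant. Because the insert operation grows the tree by exactly two nodes and the newly inserted variable has no complement in the tree (the level-$j$ individual contains no negated variables), the offspring is precisely the Pareto-optimal point $(j+1, 2j+1)$ and is added to $P$. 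The expected waiting time to advance the induction is therefore $O(n^2/(n-j))$, and summing over $j = 0, \ldots, n-1$ gives $\sum_{j=0}^{n-1} O(n^2/(n-j)) = O(n^2 H_n) = O(n^2 \log n)$. Combined with the first phase this yields the claimed bound $O(n T_{init} + n^2 \log n)$.

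The main obstacle is the structural invariant: one must argue that the Pareto-optimal individual at level $j$ is in fact ``$j$ distinct positive-variable leaves'', because only then does a single insert of any missing positive variable immediately reach the next Pareto point. Otherwise an insertion could instead produce a redundant tree of complexity $2j+1$ whose ORDER value is at most $j$, which would be dominated and therefore rejected. This invariant is forced by the tight complexity bound $2j-1$: no leaf can be wasted on a negated variable, and duplicated positive variables would also require additional leaves. A minor auxiliary point is the treatment of \smogpmulti, where the Poisson factor $\Pr[k=1]$ affects only constants, so the asymptotic bound is the same for both algorithms.
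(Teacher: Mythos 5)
Your proposal is correct and follows essentially the same route as the paper: Lemma~\ref{lem:init} for the phase producing the empty tree, then a fitness-based-partition/induction argument that extends the Pareto front one level at a time via a single insertion of a missing positive variable, with the same per-step probability bound $\frac{1}{n+1}\cdot\frac{1}{3e}\cdot\frac{n-i}{2n}$ and the same harmonic sum giving $O(n^2\log n)$. Your explicit justification of the structural invariant (that the level-$j$ Pareto-optimal individual consists of $j$ distinct positive leaves) is a welcome elaboration of a point the paper leaves implicit.
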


\begin{proof}
Due to Lemma~\ref{lem:init}, the empty tree is produced for any MO-F problem having $k$ different fitness values after an expected number of 
$O(k\Tinit)$ steps. The number of different fitness values for \ORDER is $n+1$ which implies that the empty tree is introduced into the population after an expected number of $O(n\Tinit)$ steps. This solution will never be removed from the population as it is the unique solution having complexity $0$.

Assuming that the empty tree has been introduced into the population, we analyze the time until the algorithm has produced solutions that are Pareto optimal and have \ORDER-values $1, 2, \ldots, n$. Each tree having $i$ leaves has exactly $i-1$ inner nodes. Hence, a solution that has \ORDER-value $i$ has complexity at least $2i-1$, $1 \leq i \leq n$. A solution with \ORDER-value $i$ is Pareto optimal iff it has complexity exactly $2i-1$.
We assume that the population contains all Pareto optimal solutions with \ORDER-value $j$, $0 \leq j \leq i$. Then choosing the Pareto optimal solution $X$ with \ORDER(X)$ = i$ for mutation and inserting one of the remaining $n-i$ non-expressed variable, produces a population which includes for each Pareto optimal solutions with ORDER-value $j$, $0 \leq j \leq i+1$, a corresponding solution. Note that this operation produces from a solution of complexity $2i-1$ as solution of complexity $2i-1+2 = 2(i+1)-1$ as an insertion introduces a new leaf and a new joint node.

We have to analyze the probability that such a step happens in the next iteration. Choosing $X$ for mutation has probability at least $1/(n+1)$ as the population size is upper bounded by $n+1$.
A mutation step carrying out just one single operation happens with probability at least $1/e$ and an insertion operation is chosen with probability $1/3$. Finally, $n-i$ variables (among $2n$ variables $T$) can be inserted to produce the Pareto optimal solution of \ORDER-value $i+1$. In total, the probability of producing the Pareto optimal solution of ORDER-value $i+1$ is at least
\[
\frac{1}{n+1} \cdot \frac{1}{3e} \cdot \frac{n-i}{2n}
\]

We use the method of fitness-based partitions according to the different values of $i$. This implies that the expected time until all Pareto optimal solutions have been produced after the empty tree has been included in the population is upper bounded by 

\begin{eqnarray*}
& & \sum_{i=0}^{n-1}  \left(\frac{1}{n+1} \cdot \frac{1}{3e} \cdot \frac{n-i}{2n}\right)^{-1}\\
& = & 6en(n+1) \cdot \sum_{i=0}^{n-1}  \frac{1}{n-i} = O(n^2 \log n)
\end{eqnarray*}

Taking into account the expected time to produce the empty tree, the expected time until the whole Pareto front of \MOORDER has been computed is $O(n \Tinit  + n^2 \log n)$.
\end{proof}

For \MOMAJ we can adapt the ideas of the previous proof. Again the algorithms do not encounter the problem of plateaus in the search space which makes the optimization process much easier than for \MAJ.
\begin{theorem}
The expected optimization time of \smogpsingle and \smogpmulti on \MOMAJ is $O(n \Tinit + n^2 \log n)$.
\end{theorem}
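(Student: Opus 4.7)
The plan is to mirror the proof of Theorem~\ref{thm:smoorder} almost verbatim, using the fact that on \MOMAJ the Pareto front has the same combinatorial shape as on \MOORDER and the dominance-based selection of \smogp rules out the plateau phenomenon that complicated the single-objective \MAJ analysis of \cite{GPFOGA11}. Concretely, I would first invoke Lemma~\ref{lem:init} with $k=n+1$, since \MAJ takes values in $\{0,1,\ldots,n\}$, to conclude that after an expected $O(n\Tinit)$ iterations the empty tree has entered the population $P$, and that once present it can never be removed because it is the unique solution of complexity $0$.

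Next, I would characterise the Pareto-optimal set: any tree with $i$ leaves has complexity exactly $2i-1$, and to achieve \WMAJ-value $i$ one needs at least $i$ distinct positive variables represented, hence complexity at least $2i-1$; equality is realised precisely by trees consisting of $i$ distinct positive variables and no other leaves. Given this, a straightforward fitness-level argument on the index $i$ of the largest \MAJ-value whose Pareto optimum is present in $P$ drives the analysis. Suppose $P$ already contains Pareto optima for \MAJ-values $0,1,\ldots,i$. Selecting the Pareto optimum $X$ of \MAJ-value $i$ has probability at least $1/(n+1)$ since $|P|\le n+1$; performing a single \hvlMutateTwoPointOh operation has probability $1$ for \smogpsingle and at least $1/e$ for \smogpmulti; choosing insert has probability $1/3$; and inserting one of the $n-i$ still-absent positive variables has probability $(n-i)/(2n)$. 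Because $X$ contains no negative literals and no duplicates, inserting a fresh positive $x_j$ immediately satisfies $\mathrm{count}(x_j)\ge 1 > 0 = \mathrm{count}(\bar x_j)$, so the offspring has \MAJ-value $i+1$ and complexity $2(i+1)-1$, i.e.\ it is Pareto optimal. It therefore enters $P$ and no Pareto optimum already present is weakly dominated by it.

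Summing the reciprocal success probabilities over $i=0,\ldots,n-1$ gives
\[
\sum_{i=0}^{n-1}\frac{6e\,n(n+1)}{n-i} = O(n^{2}\log n),
\]
and adding the $O(n\Tinit)$ bound from Lemma~\ref{lem:init} for the initial phase yields the claimed $O(n\Tinit + n^{2}\log n)$. I would then verify that the invariant ``all Pareto optima with \MAJ-value $\le i$ are present in $P$'' is preserved: an accepted offspring $Y$ removes only solutions weakly dominated by it, and a Pareto optimum of \MAJ-value $j\le i$ has strictly smaller complexity than $Y$ whenever $Y$ has \MAJ-value $>j$, so it survives.

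The only non-routine step is the plateau-avoidance observation; this is where the \MAJ analysis in \cite{GPFOGA11} spent its effort, and I expect it to be the main conceptual obstacle here too. The point to make is that any offspring with the same \MAJ-value as an incumbent Pareto optimum but strictly larger complexity is strictly dominated and hence rejected, so the random-walk-on-plateaus behaviour simply cannot occur in \smogp. Once this is explicitly stated, the remainder of the proof is a direct transcription of the \MOORDER argument with \WORDER replaced by \WMAJ throughout.
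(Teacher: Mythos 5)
Your proposal is correct and follows essentially the same route as the paper: the paper's own proof of this theorem simply reuses the \MOORDER argument (Lemma~\ref{lem:init} for the empty tree in $O(n\Tinit)$ steps, population size at most $n+1$, and the same per-level success probability $\frac{1}{n+1}\cdot\frac{1}{3e}\cdot\frac{n-i}{2n}$ summed over $i$). Your additional remarks on plateau avoidance and preservation of the invariant are consistent with the paper's (briefer) discussion preceding the theorem.
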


\begin{proof}

For \MOMAJ we can follow the same arguments. The number of different fitness values of \MAJ is upper bounded by $n+1$ and this is an upper bound on the population size. 
The empty tree is produced after an expected number of $O(nT_{init})$ steps according to Lemma~\ref{lem:init}.
Having a population which contains all Pareto optimal solutions with \MAJ-values $j$, $0 \leq j \leq i$, a population which includes for each Pareto optimal solutions with \MAJ-value $j$, $0 \leq j \leq i+1$ is obtained by inserting one of the non-expressed variables into the Pareto optimal individual $X$ with $\MAJ(X)=i$. The probability that such a step happens in the next iteration is at least
\[
\frac{1}{n+1} \cdot \frac{1}{3e} \cdot \frac{n-i}{2n}
\]
and summing up the expected waiting times as done in the proof of Theorem~\ref{thm:smoorder} completes the proof.
\end{proof}

\subsection{Weighted \ORDER and \MAJ}

In our previous investigations of \MOORDER and \MOMAJ each expressed variable contributed an amount of $1$ to the overall fitness of a solution. In this subsection, we extend our investigations to \WMOORDER and \WMOMAJ.

Considering these problems, it is in principle possible to have an exponential number of incomparable solution. Assume for example that $w_i = 2^{n-i}$, $1 \leq i \leq n$ holds. Then there are $2^n$ different fitness values for \WORDER and \WMAJ. Furthermore, one can construct trees for these solutions such that no solution dominates any other solution in this set.

Note, that such a set of solutions does not constitute the Pareto front and that the Pareto front has size $n+1$. As stated in Section~\ref{sec2}, we assume without loss of generality that $w_1 \geq w_2 \geq \ldots \geq w_n>0$ holds in this paper. Then the tree containing exactly the variables $x_1, \ldots, x_i$ is Pareto optimal and has complexity $2i-1$, $1\leq i \leq n$. Furthermore, the empty tree is Pareto optimal which gives us the whole Pareto front of size $n+1$.

We consider the special case, where \smogpsingle starts with a non-redundant solution. We show that in this case \smogpsingle will not accept any redundant solution. This is the key idea for the following theorem.

\begin{theorem}
Starting with a non-redundant initial solution, the expected optimization time of \smogpsingle on \WMOORDER and \WMOMAJ is $O(n^3)$.
\end{theorem}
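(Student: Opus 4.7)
The plan is to combine a structural invariant on the population with a SEMO-style sequential buildup. Under the assumption $w_1 \ge \cdots \ge w_n > 0$, the Pareto front is exactly the $n+1$ objective vectors $(w_1+\cdots+w_i,\, 2i-1)$ for $i=1,\ldots,n$ together with $(0,0)$; write $P_i$ for any tree realizing the $i$-th vector, so $P_0$ is the empty tree and $P_i$ for $i \ge 1$ is any tree whose leaves are exactly $\{x_1,\ldots,x_i\}$.

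Step~1, which I expect to be the main obstacle, is a non-redundancy invariant: starting from the non-redundant initial solution, every member of the population remains non-redundant at all times. I would prove this by case analysis on the three \hvlMutateTwoPointOh{} operations applied to a non-redundant parent $X$ with $k$ expressed variables and complexity $2k-1$. A deletion necessarily removes an expressed leaf (all leaves of a non-redundant $X$ are expressed), yielding a non-redundant offspring of complexity $2k-3$. An insertion either introduces a new expressed variable (non-redundant, complexity $2k+1$) or leaves the expressed set unchanged (complexity $2k+1$, same $F$-value, so $X \succ Y$ and $Y$ is rejected). A substitution either swaps one expressed variable for a new one (non-redundant with complexity $2k-1$) or reduces the expressed count by one while keeping complexity $2k-1$ (redundant, $F(Y)<F(X)$, so $X$ dominates). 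The most delicate sub-case is for \WMOMAJ, where a substitution introduces a $\bar x_j$ whose positive counterpart is still in the tree; here one must verify that the count condition $\mathrm{count}(x_j) \ge \mathrm{count}(\bar x_j)$ is preserved, so $x_j$ remains expressed and the offspring is indeed dominated by $X$. The immediate consequence of the invariant is that at most one solution per complexity level in $\{0,1,3,\ldots,2n-1\}$ can be non-dominated, hence $|P| \le n+1$ throughout the run.

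Step~2 is to bring the empty tree $P_0$ into the population in $O(n^2)$ expected steps. This is essentially Lemma~\ref{lem:init} refined by the bound $|P| \le n+1$ from Step~1: at every step the minimum-complexity member of $P$ is selected with probability at least $1/(n+1)$; a single deletion, chosen with probability $1/3$, then produces a non-redundant offspring of strictly smaller complexity that is non-dominated by any current member and therefore accepted. Because the initial complexity is at most $2n-1$ and the minimum of $P$ strictly decreases on every success, fitness-based partitions give expected time $O(n) \cdot O(n) = O(n^2)$.

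Step~3 is the SEMO-style sequential buildup. Each $P_{i-1}$ is Pareto optimal once present, so it is never removed. Selecting it (probability $\ge 1/(n+1)$), invoking insert (probability $1/3$), and picking the unique missing variable $x_i \in T$ (probability $1/(2n)$) produces a tree whose expressed set equals $\{x_1,\ldots,x_i\}$; this realizes the $i$-th Pareto optimal objective vector and is accepted. The expected waiting time until $P_i$ joins $P$ after $P_{i-1}$ has joined is therefore $O(n^2)$, and linearity of expectation over the $n$ successive stopping times yields $O(n^3)$. Adding the $O(n^2)$ of Step~2 to the $O(n^3)$ of Step~3 gives the claimed bound $O(n^3)$.
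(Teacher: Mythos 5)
Your proposal is correct and follows essentially the same three-stage structure as the paper's own proof: the non-redundancy invariant (ruling out accepted offspring from redundant insertions/substitutions) giving $|P|\leq n+1$, then the $O(n^2)$ descent to the empty tree, then the $O(n^2)$-per-stage sequential buildup along the Pareto front. Your case analysis is if anything slightly more careful than the paper's (e.g.\ the \WMOMAJ count-condition sub-case), but the argument is the same.
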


\begin{proof}
We first study the population size and show that it is in each iteration at most $n+1$.
We claim that the population can only include solutions that have no redundant variables.

The initial solution is non-redundant due to the assumption of the theorem. We prove by induction that this property also holds for all solutions that are later on accepted by the algorithm. Let $X$ be a non-redundant solution of the current population and $Y$ be a redundant offspring created by a single operation. The only operations that can lead to redundant variables in $Y$ are substitute and insert. If substitute introduces a redundant variable, it has to remove a non-redundant variable at the same time. This decreases the fitness while the complexity stays the same. Hence, such a step is not accepted. If an insertion operation introduces a redundant variable then the fitness stays the same and the complexity increases. Such steps are also not accepted. Hence, the algorithm will not accept a redundant solution at any point of the optimization run. There are at most $n$ variables that can be expressed. Hence, the complexity can only take on $n+1$ different values, namely $0, 1, 3, 5, \cdots 2n-1$. This implies that the population size is upper bounded by $n+1$.

We now study how to obtain the different Pareto optimal solutions. We first analyze the expected time until the population contains the empty tree which is the Pareto optimal solution of lowest complexity. 
Let $X$ be the solution in the population that has currently the lowest complexity. If we delete one of the variables we get a new solution $Y$ with $C(Y) < C(X)$. The probability for such a step is at least 
$\frac{1}{3} \cdot \frac{1}{n+1}$
and the expected waiting time to produce such a solution $Y$ is $O(n)$. There are at most $n$ such steps until the empty tree has been reached which implies that the empty tree is included in the population after an expected number of $O(n^2)$ steps.

The empty tree is a Pareto optimal solution as it has complexity $0$. A solution of complexity $2j-1$ is Pareto optimal if the tree contains for the largest $j$ weights exactly one positive variable. 

Let $P$ be a population that contains for all \WORDER-values (same arguments can be used  for \WMAJ-values)
\[
\sum_{k=1}^j w_k,  ~~~~0 \leq j \leq i <n,
\]
a Pareto optimal solution. In order to obtain a population that contains for all values
\[
\sum_{k=1}^j w_k, ~~~~0 \leq j \leq i+1 \leq n,
\]
a Pareto optimal solution, the algorithm can choose the Pareto optimal solution $X$ of weight
\[
\sum_{k=1}^i w_k
\]
 for mutation and insert the variable $x_{i+1}$ at any position of the tree $X$. 
The probability of such a step is at least
\[
\frac{1}{n+1} \cdot \frac{1}{3} \cdot \frac{1}{2n} = \Omega(1/n^2)
\]
and the expected waiting time for such a step is therefore $O(n^2)$. A population containing for each Pareto optimal objective vector one single solution is obtained after at most $n$ such steps which implies that the expected optimization time is upper bounded by $O(n^3)$.
\end{proof}

\section{Conclusions}

With this paper we have contributed to the theoretical understanding of genetic programming. Such algorithms often encounter the bloat problem which means that syntax trees grow during the optimization process without providing additional benefit. One way of dealing with the bloat problem is to take the complexity as an additional criterion to measure the quality of a solution. We have studied the \oneonegp on multi-criteria fitness functions for \WORDER and \WMAJ. These problems are generalizations of \ORDER and \MAJ analyzed in~\cite{GPFOGA11} and we have given better upper bounds than the ones presented in \cite{GPFOGA11}.

Afterwards, we analyzed a  multi-objective genetic programming algorithm called  \smogp. 
This algorithm is inspired by the SEMO algorithm which has been considered in several studies on the computational complexity of evolutionary multi-objective optimization. We are optimistic that it can serve for further studies on the computational complexity of multi-objective genetic programming. 
We have shown that the Pareto fronts of \MOORDER and \MOMAJ are computed by \smogp within a small amount of time.
Furthermore, we have extended our investigations to \WMOORDER and \WMOMAJ which can encounter an exponential number of trade-off objective vectors. However, the size of the Pareto front is linear with respect to the problem dimension and \smogpsingle computes this Pareto front in expected polynomial time when starting with a non-redundant solution. 

We finish with two interesting topics for future work.
\begin{itemize}
\item Determine the expected optimization time of \oneonegpmulti which chooses $k$ according to 1+Pois(1) on \WMOORDER and \WORDER, \WMOMAJ, and \WMAJ. 
\item Determine the expected optimization time of \smogpmulti on \WMOORDER and \WMOMAJ.
\end{itemize}

\section*{Acknowledgement}
The author thanks Christian Igel, Aneta Neumann, Una-May O'Reilly, and Markus Wagner for interesting discussions on the topic of this paper.

\balance
\bibliographystyle{abbrv}
\bibliography{main,unamay}

\end{document}